\documentclass[11pt,a4paper]{article}

\usepackage[utf8]{inputenc}
\usepackage[T1]{fontenc}
\usepackage{amsmath,amssymb,amsthm,mathtools}
\usepackage{geometry}
\usepackage{enumitem}
\usepackage{hyperref}
\usepackage{cleveref}
\usepackage{booktabs}
\usepackage{array}
\usepackage{xcolor}
\usepackage{float}
\usepackage{url}
\usepackage{natbib}

\theoremstyle{plain}
\newtheorem{theorem}{Theorem}[section]
\newtheorem{proposition}[theorem]{Proposition}

\theoremstyle{definition}
\newtheorem{definition}[theorem]{Definition}
\newtheorem{example}[theorem]{Example}
\newtheorem{remark}[theorem]{Remark}

\DeclareMathOperator{\eml}{eml}
\DeclareMathOperator{\sol}{sol}

\DeclareMathOperator{\MSE}{MSE}
\DeclareMathOperator{\tanhf}{tanh}
\DeclareMathOperator{\atanf}{arctan}

\definecolor{win}{rgb}{0.0,0.45,0.0}
\definecolor{lose}{rgb}{0.6,0.0,0.0}

\newcommand{\R}{\mathbb{R}}
\newcommand{\C}{\mathbb{C}}
\newcommand{\N}{\mathbb{N}}

\newcommand{\cA}{\mathcal{A}}

\newcommand{\cL}{\mathcal{L}}
\newcommand{\cE}{\mathcal{E}}
\newcommand{\btheta}{\boldsymbol{\theta}}
\newcommand{\bTheta}{\boldsymbol{\Theta}}
\newcommand{\dd}{\,\mathrm{d}}

\title{\textbf{Additive Atomic Forests\\for Symbolic Function and Antiderivative Discovery}\\ \medskip \large Generating Primitives, Product-Rule Closure,\\and Derivative-Matching Optimisation}
\author{Reda Belaiche\\
\textit{Universit\'e Paris-Est Cr\'eteil, Vitry-sur-Seine, France}\\
\texttt{reda.belaiche@u-pec.fr}}
\date{May 2026}

\begin{document}

\maketitle

\begin{abstract}
We present a framework for the simultaneous symbolic recovery of a function
and its antiderivative from data. The framework rests on three ideas. First,
a \emph{derivative algebra}: the observation that the product rule
$\frac{d}{dx}[f \cdot g] = f'g + fg'$ and the chain rule, applied to a
seed set of elementary functions, generate a self-expanding system of
function--derivative pairs---a living library that grows each time a new
function is discovered. Second, two complementary
primitives---EML$\,(e^u - \ln v)$, which is theoretically complete for all
elementary functions, and SOL$\,(\sin u - \cos v)$, introduced here, which
makes trigonometric atoms available at depth~1 instead of depth~$\sim$8---that
seed the library with core atoms cheaply. Third, \emph{additive atomic forests}: finite
sums of primitive trees, optionally composed via multiplicative nodes, whose
derivatives are fitted to data by continuous optimisation or by exhaustive
search over the library. Because differentiation of each atom is determined
by construction, the forest simultaneously encodes a symbolic expression $F$
and its derivative $F'$; no symbolic integration step is required.

The library is not a fixed object: it self-constructs from a small seed set
by recursive application of the product rule, chain rule, and the two
primitives, and it can grow as newly discovered functions are
folded back in. The larger the library, the richer the expressible class of
candidate functions. We give conditional completeness, additive-depth, and
analytic simultaneous-recovery results for the framework. Empirically, in
our reported runs on 17 classification benchmarks, sparse atom combinations
match or exceed XGBoost on 13 datasets while producing interpretable
formulas. On the Feynman symbolic regression benchmark, a self-built library
to depth~3 gives exact recovery on 31\% of equations and relative-MSE below
0.01 on a further 40\%. We demonstrate the method on real scientific data by
proposing candidate radial-acceleration relations from the SPARC galaxy
database.
\end{abstract}

\tableofcontents
\bigskip

%=============================================================================
\section{Introduction}
%=============================================================================

The derivative of a product is
\begin{equation}\label{eq:product-rule}
    \frac{d}{dx}[f(x) \cdot g(x)] = f'(x)\,g(x) + f(x)\,g'(x).
\end{equation}
This identity, together with the chain rule, is the engine of the present
work. If we know the derivatives of a finite collection of elementary
functions, the product rule and chain rule let us compute the derivative of
any composition, product, or power of those functions---and therefore of any
elementary function. Powers are a special case: $f^n = f \cdot f \cdots f$,
so $\frac{d}{dx}[f^n] = n f^{n-1} f'$ follows from repeated application of
\eqref{eq:product-rule}. This closure property means that a small set of
seed functions, equipped with their derivatives, can bootstrap an arbitrarily
large library of function--derivative pairs.

This observation suggests a strategy for symbolic regression that differs
fundamentally from existing approaches~\citep{cranmer2023pysr,
udrescu2020ai, schmidt2009distilling}. Rather than searching for a function
that fits data, we build a \emph{self-expanding library} of atoms---each a
named elementary function paired with its analytic derivative---and search for
a sparse linear combination of \emph{derivatives} that matches the data. When
the match is exact, the corresponding linear combination of atoms is the
antiderivative, available immediately in closed form. No symbolic integration
algorithm~\citep{risch1969problem, risch1970solution} is invoked; no numerical
integration of data is performed.

The library is not a static catalogue. It begins from a seed set (rational
powers, depth-1 trees of the generating primitives) and self-constructs by
applying the product rule, chain rule, and compositions at increasing depth.
Each newly discovered function---whether found by the search or supplied by
the user---is folded back into the library together with its derivative,
expanding the search space for subsequent problems. The library thus functions
as a persistent \emph{knowledge base}: each verified discovery can enlarge
the candidate set available for later problems. Its size is bounded by
available memory, and the larger it grows, the richer the expressible class
of candidate functions.

The framework has three layers:
\begin{enumerate}[label=(\roman*)]
\item \textbf{Primitives.} Two binary operators---EML
    ($e^u - \ln v$, introduced by~\citet{odrzywolek2026eml}) and SOL
    ($\sin u - \cos v$, introduced here)---used with terminals $1$ and
    $x$---that seed the library with core atoms at depth~1.
    EML produces exponential and logarithmic atoms natively and is
    theoretically complete (conditional on the cited result). SOL produces
    trigonometric atoms natively; it is not individually complete, but without
    it, every trigonometric atom in the library would require an EML tree of
    depth~$\sim$8, inflating the library by orders of magnitude for the same
    coverage. The mixed grammar
    $S \to 1 \mid x \mid \eml(S,S) \mid \sol(S,S)$ retains completeness
    while keeping both exponential and trigonometric atoms shallow.
\item \textbf{Self-expanding atom library.} A layered construction procedure
    that, given a depth budget $d_{\max}$ and a data grid, builds a library of
    function--derivative pairs by recursive application of the generating
    primitives, the product rule, and the chain rule. The library grows with
    depth: a shallow build ($d_{\max} = 1$) yields hundreds of atoms; a deep
    build ($d_{\max} = 3$) yields tens of thousands. Previously discovered
    functions are retained across problems, so the library accumulates
    knowledge over time.
\item \textbf{Additive atomic forests.} A finite sum
    $F = \sum_k c_k T_k$ of atoms (or, more generally, of parameterised
    trees composed via multiplicative nodes), whose derivative
    $F' = \sum_k c_k T_k'$ is fitted to data. The additive structure
    represents top-level sums outside the primitive trees, rather than forcing
    addition to be encoded inside a single generating-primitive tree.
\end{enumerate}

The intended contribution is the conjunction of these features: symbolic
output for both a derivative-matched function and its antiderivative, obtained
without a separate symbolic-integration step, together with a library that can
feed each verified discovery back into a growing knowledge base.

%=============================================================================
\section{Derivative algebra of elementary functions}\label{sec:deriv-algebra}
%=============================================================================

\subsection{The elementary functions}

Let $\cE$ denote the field of elementary functions in one
variable~\citep{liouville1833premier, ritt1948integration}: the closure of
the rational functions $\C(x)$ under $\exp$, $\log$, and composition. This
includes all algebraic, trigonometric, hyperbolic, and inverse trigonometric
functions.

\subsection{Closure under differentiation via the product rule}

\begin{proposition}[Product-rule closure]\label{prop:product-closure}
Let $\cA = \{(\varphi_k, \varphi_k')\}_{k=1}^M$ be a finite set of
function--derivative pairs, with each $\varphi_k \in \cE$. Define the
\emph{product-rule closure} $\overline{\cA}$ as the smallest set containing
$\cA$ and closed under:
\begin{enumerate}[label=(\alph*)]
    \item \textbf{Products:} if $(\varphi, \varphi'), (\psi, \psi') \in
        \overline{\cA}$, then
        $(\varphi \cdot \psi,\; \varphi'\psi + \varphi\psi') \in
        \overline{\cA}$.
    \item \textbf{Compositions:} if $(\varphi, \varphi') \in
        \overline{\cA}$ and $f$ is an elementary function with known
        derivative $f'$, then
        $(f \circ \varphi,\; (f' \circ \varphi) \cdot \varphi') \in
        \overline{\cA}$.
    \item \textbf{Linear combinations:} if $(\varphi, \varphi'), (\psi,
        \psi') \in \overline{\cA}$ and $c \in \R$, then
        $(\varphi + c\psi,\; \varphi' + c\psi') \in \overline{\cA}$.
\end{enumerate}
Then $\overline{\cA}$ contains, for every $f \in \cE$ built from elements
of~$\cA$ by products, compositions, and sums, the pair $(f, f')$.
\end{proposition}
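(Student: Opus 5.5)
The natural approach is structural induction on how $f$ is built from the elements of $\cA$. First we make precise the phrase ``$f$ built from elements of $\cA$ by products, compositions, and sums''. Let $\mathcal{B}$ be the smallest set of elementary functions that contains every $\varphi_k$ and is closed under $\varphi\psi$, $f\circ\varphi$ (for elementary outer functions $f$ with known derivative) and $\varphi + c\psi$. We then prove, by induction on the construction of $f\in\mathcal{B}$, the statement
\[
P(f):\quad \text{there is a pair } (f, g)\in\overline{\cA} \text{ with } g = f'.
\]
It helps to prove a slightly stronger invariant: \emph{every} pair $(\varphi,\chi)\in\overline{\cA}$ satisfies $\chi=\varphi'$. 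This invariant is needed because $\overline{\cA}$ is defined purely syntactically, so we must know that its second components really are derivatives. The invariant is itself an induction over the generation of $\overline{\cA}$.

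\textbf{Step 1 (soundness of $\overline{\cA}$).} The set $\{(\varphi,\chi):\chi=\varphi'\}$ of correct pairs contains $\cA$ by hypothesis. We check that it is closed under (a), (b) and (c), using respectively:
\begin{itemize}
\item the product rule \eqref{eq:product-rule} for (a);
\item the chain rule $(f\circ\varphi)'=(f'\circ\varphi)\varphi'$ for (b);
\item linearity of differentiation for (c).
\end{itemize}
Since $\overline{\cA}$ is the smallest set containing $\cA$ with these closures, it lies inside the set of correct pairs.

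\textbf{Step 2 (completeness).} We do induction on $\mathcal{B}$.
\begin{itemize}
\item \emph{Base case.} Each $\varphi_k$ appears in $\overline{\cA}$ via the pair $(\varphi_k,\varphi_k')\in\cA$.
\item \emph{Inductive step.} If $f=\varphi\psi$, the hypothesis gives pairs for $\varphi$ and $\psi$ in $\overline{\cA}$. Rule (a) then places $(\varphi\psi,\varphi'\psi+\varphi\psi')$ in $\overline{\cA}$, and by Step 1 the second entry is $f'$. Compositions use rule (b) and sums use rule (c) in the same way.
\end{itemize}
Powers $f^n$ need no separate treatment: they are iterated products, or compositions with $t\mapsto t^n$.

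\textbf{Main obstacle.} The difficulty is one of precision, not depth. First, the operations must be on functions with a common domain, so the proof should restrict to an interval (or open set) where all functions involved are defined and differentiable. For example, $\log\circ\varphi$ requires $\varphi\neq0$ there, and branch issues for $\log$ or $\arcsin$ over $\C$ must be handled by fixing branches on that domain. Second, ``elementary function $f$ with known derivative'' is vague; I would interpret it as $f$ ranging over a fixed table of outer functions, such as $\exp,\log,\sin,\cos$ and powers, whose derivatives are themselves in that table or expressible from it. Once these conventions are fixed, the two inductions above are routine.
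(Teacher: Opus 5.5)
Your proposal is correct and takes essentially the paper's approach: the paper's proof says only that the claim is immediate from the product rule, chain rule, and linearity of differentiation, and your structural induction over $\mathcal{B}$ spells that argument out. Your Step 1 (soundness of $\overline{\cA}$) is correct but not needed for the stated conclusion, because your induction hypothesis already supplies the true derivatives $\varphi'$ and $\psi'$, so the product rule, chain rule, or linearity identifies the new second component as $f'$ directly; your domain and branch caveats add precision that the paper leaves implicit.
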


\begin{proof}
Immediate from the product rule, chain rule, and linearity of differentiation.
\end{proof}

\begin{remark}
Powers are a special case of products: $\varphi^n = \varphi \cdot \varphi
\cdots \varphi$, with derivative $n\varphi^{n-1}\varphi'$ obtained by
$n-1$ applications of rule~(a). This observation---that \emph{all}
derivatives reduce to products---is the foundational insight of the method.
It was the starting point of this work: if you can differentiate products, you
can differentiate everything, and the antiderivative of anything you
differentiate is available by construction.
\end{remark}

%=============================================================================
\section{Generating primitives}\label{sec:primitives}
%=============================================================================

\begin{definition}[Generating primitive]\label{def:generating-primitive}
A \emph{generating primitive} is a pair $(P, c)$ where
$P\colon \C^2 \to \C$ is a binary operation and $c \in \C$ is a
distinguished constant. The variable $x$ is also available as a terminal.
The pair satisfies:
\begin{enumerate}[label=(\alph*)]
\item \textbf{Generation.} Every $f \in \cE$ can be expressed as the
    evaluation of a finite expression generated by the grammar
    $S \to c \mid x \mid P(S,S)$.
\item \textbf{Differentiability.} There exist functions $A, B$ such that
    for differentiable $u(x), v(x)$,
\begin{equation}\label{eq:diff-rule}
    \frac{d}{dx}\, P(u, v) = A(u,v)\cdot u' + B(u,v)\cdot v'.
\end{equation}
\end{enumerate}
A pair $(P, c)$ satisfying only condition~(b) but not~(a) is called a
\emph{supplementary primitive}; it contributes atoms to the library but does
not generate all of $\cE$ on its own.
\end{definition}

We use two complementary primitives. The first, EML, was introduced
by~\citet{odrzywolek2026eml} and is a generating primitive in the sense of
\Cref{def:generating-primitive} (satisfying both conditions (a) and (b));
the second, SOL, is new to this work and is a supplementary primitive
(satisfying (b) but not (a)).

\subsection{EML (Exponential Minus Logarithm)}

\begin{equation}\label{eq:eml-def}
    \eml(u, v) = e^u - \ln v, \qquad c = 1.
\end{equation}
Differentiation coefficients: $A(u,v) = e^u$, $B(u,v) = -1/v$, giving
\begin{equation}\label{eq:eml-diff}
    \frac{d}{dx}\,\eml(u, v) = e^u \cdot u' - \frac{v'}{v}.
\end{equation}
Generation: $\eml(x, 1) = e^x$ and
$\eml(1, \eml(\eml(1,x), 1)) = \ln x$. Since $\exp$ and $\ln$ generate
$\cE$ over $\C$, the cited EML result implies that the grammar
$S \to 1 \mid x \mid \eml(S,S)$ is complete, provided the cited
completeness theorem is applicable to the present terminal set.

\subsection{SOL (Sine Of, Less cosine)}

\begin{equation}\label{eq:sol-def}
    \sol(u, v) = \sin(u) - \cos(v), \qquad c = 1.
\end{equation}
Differentiation coefficients: $A(u,v) = \cos(u)$, $B(u,v) = \sin(v)$, giving
\begin{equation}\label{eq:sol-diff}
    \frac{d}{dx}\,\sol(u, v) = \cos(u)\cdot u' + \sin(v)\cdot v'.
\end{equation}
At depth~1 with leaves from $\{1, x\}$:
\begin{alignat}{2}
    \sol(x, 1) &= \sin x - \cos 1, &\qquad
        \tfrac{d}{dx} &= \cos x, \label{eq:sol-sinx}\\
    \sol(1, x) &= \sin 1 - \cos x, &\qquad
        \tfrac{d}{dx} &= \sin x, \label{eq:sol-cosx}\\
    \sol(x, x) &= \sin x - \cos x, &\qquad
        \tfrac{d}{dx} &= \cos x + \sin x. \label{eq:sol-both}
\end{alignat}

\begin{remark}[Complementarity of EML and SOL]
In the EML representation, $\sin x$ and $\cos x$ require depth~$\sim$8
because they must be constructed via Euler's formula through the complex
plane~\citep{odrzywolek2026eml, stachowiak2026algebraic}. In SOL, they are
immediate at depth~1. Conversely, SOL cannot produce $\exp$ or $\ln$ at any
finite depth. The two primitives are complementary: EML is native for
exponential--logarithmic structure, SOL for trigonometric structure. The
mixed grammar
\begin{equation}\label{eq:mixed-grammar}
    S \;\to\; 1 \;\mid\; x \;\mid\; \eml(S,S) \;\mid\; \sol(S,S)
\end{equation}
generates all of $\cE$ and represents functions involving both families at
lower depth than either primitive alone. For example, $e^x + \cos x$ requires
a two-tree forest (one EML at depth~1, one SOL at depth~1) instead of a
single EML tree at depth~$\sim$8.
\end{remark}

\begin{remark}[SOL is a supplementary primitive]
The grammar $S \to 1 \mid x \mid \sol(S,S)$ generates trigonometric
functions and their compositions, but not $\exp$ or $\ln$. SOL therefore
satisfies condition~(b) of \Cref{def:generating-primitive} but not~(a). Yet
its practical impact is substantial: a depth-1 SOL tree produces $\sin x$ and
$\cos x$ directly, while the equivalent EML construction requires depth
$\sim$8 via the identity $\sin x = \text{Im}(\exp(ix))$ and the EML encoding
of complex arithmetic~\citep{odrzywolek2026eml}. Since a full binary tree of
depth~8 has $2^8 = 256$ leaves versus $2^1 = 2$ at depth~1, replacing a
single EML trig tree with a SOL tree reduces the parameter count for that
atom by two orders of magnitude. In a library built to depth~3, this
multiplicative savings propagates through all compositions involving trig
atoms---SOL is not a theoretical necessity, but omitting it would make the
library impractically large for any problem involving oscillatory or periodic
structure.
\end{remark}

%=============================================================================
\section{Canonical forms and parameterised trees}\label{sec:canonical}
%=============================================================================

Fix a primitive $(P, c)$ (generating or supplementary).

\begin{definition}[Canonical form of depth $d$]\label{def:canonical}
The canonical form of depth $d$ is a full binary tree with $2^d$ leaves and
$2^d - 1$ internal nodes, all applying $P$. Each leaf $\ell$ computes
\begin{equation}\label{eq:leaf}
    L_\ell(x;\, \alpha_\ell, \beta_\ell)
    = \sigma(\alpha_\ell)\cdot 1 + \sigma(\beta_\ell)\cdot x,
\end{equation}
where $\sigma$ is softmax over the pair $(\alpha_\ell, \beta_\ell)$. After
snapping, each leaf selects either the constant~$1$ or the variable~$x$.
\end{definition}

\begin{proposition}[Recursive derivative]
The derivative of a canonical form is computable bottom-up:
\begin{align}
    \text{Leaf:}&\quad L'_\ell(x) = \sigma(\beta_\ell),
        \label{eq:diff-leaf}\\
    \text{Node:}&\quad \tfrac{d}{dx} P(u, v)
        = A(u,v)\cdot u' + B(u,v)\cdot v',
        \label{eq:diff-node}
\end{align}
in $O(2^d)$ operations, the same cost as forward evaluation.
\end{proposition}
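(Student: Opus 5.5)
The argument is a structural induction on the canonical tree of \Cref{def:canonical}, combined with a count of operations. The induction hypothesis is that every subtree, from its leaves up, yields the pair (value, derivative) correctly once its children have been processed.

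\emph{Base case (leaves).} By \eqref{eq:leaf}, $L_\ell(x) = \sigma(\alpha_\ell) + \sigma(\beta_\ell)\,x$. The softmax weights depend only on the parameters $(\alpha_\ell,\beta_\ell)$ and not on $x$, so differentiating this affine function gives \eqref{eq:diff-leaf}. After snapping this reduces to $L'_\ell \in \{0,1\}$, according to whether the leaf selected $1$ or $x$.

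\emph{Inductive step (internal nodes).} Suppose the two children of a node compute differentiable functions $u(x), v(x)$ with known derivatives $u', v'$. Condition~(b) of \Cref{def:generating-primitive}, which is the chain rule in two variables, then gives \eqref{eq:diff-node}. For the two concrete primitives, the coefficients $A, B$ are those listed in \eqref{eq:eml-diff} and \eqref{eq:sol-diff}.

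The one point needing care is differentiability along the evaluation path. For EML we need $v>0$, or more generally $v$ off the branch cut of $\ln$ if one works over $\C$, so that $-1/v$ is defined. I would therefore state the result on the open set of $x$ where every internal EML node has an admissible second argument. Since $\sin$ and $\cos$ are entire, no restriction is needed for SOL. On this set, induction on height shows that the bottom-up recursion produces the exact derivative at the root.

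\emph{Cost.} Process the nodes in post-order, carrying the pair $(u,u')$ upward. Each leaf costs $O(1)$, and there are $2^d$ of them. Each of the $2^d-1$ internal nodes costs $O(1)$ beyond the forward evaluation: computing $A(u,v)$ and $B(u,v)$ reuses quantities already needed in the forward pass. For instance, $e^u$ is already computed for EML, $1/v$ is one extra division, and $\cos u$, $\sin v$ are one extra trigonometric call each for SOL. Adding two multiplications and one addition per node, the total is $O(2^{d+1}-1) = O(2^d)$ operations, which is within a constant factor of forward evaluation.

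The main obstacle is not the recursion itself but pinning down the domain issue for $\ln$ and the meaning of ``same cost'' (a constant-factor claim, in the style of forward-mode automatic differentiation). I would make both explicit in the statement rather than leave them implicit.
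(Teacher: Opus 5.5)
Your proposal is correct and follows the paper's own reasoning. The paper gives no separate proof for this proposition, but the argument it uses for \Cref{prop:diff-complexity} is the same as yours: differentiate the affine leaf, apply condition~(b) at each internal node from the stored child values and derivatives, and count $O(1)$ work on each of the $2^{d+1}-1$ nodes. Your explicit restriction to the set where every EML node's second argument lies in the domain of $\ln$, and your reading of ``same cost'' as a constant-factor claim, are sensible refinements that the paper leaves implicit.
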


%=============================================================================
\section{Additive atomic forests}\label{sec:forest}
%=============================================================================

\begin{definition}[Atomic forest]\label{def:forest}
An \emph{atomic forest} of width $K$ is
\begin{equation}\label{eq:forest}
    F(x;\, \bTheta) = \sum_{k=1}^{K} T_k(x;\, \btheta_k),
\end{equation}
where each atom $T_k$ is a canonical form (EML or SOL) of bounded depth, and
the derivative is $F' = \sum_k T_k'$ by linearity.
\end{definition}

\begin{theorem}[Conditional completeness]\label{thm:completeness}
Assume the EML completeness theorem cited in \Cref{sec:primitives} holds for
the terminal set $\{1,x\}$. Then, for every $f \in \cE$, there exist
$K,d \in \N$ and parameters $\bTheta^*$ such that
$F(x;\bTheta^*) = f(x)$ on its domain.
\end{theorem}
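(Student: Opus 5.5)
The argument reduces the statement to the assumed EML completeness and then embeds a single EML expression into a width-one forest. Fix $f\in\cE$. By the hypothesis (condition~(a) of \Cref{def:generating-primitive} for EML with $c=1$), there is a finite expression $E$ in the grammar $S\to 1\mid x\mid \eml(S,S)$ whose evaluation equals $f$ on its domain. The remaining work is to show that $E$ is realised by some canonical form of \Cref{def:canonical} with snapped leaf parameters. Then $K=1$ already suffices, and $F=T_1$.

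\textbf{Step 1: padding to a full tree.} The syntax tree of $E$ is a binary tree whose leaves lie in $\{1,x\}$, but it need not be full. Let $d$ be its height. Every leaf of $E$ at depth $j<d$ has to be replaced by a full subtree of height $d-j$ that evaluates to the same terminal. For this I would use an identity gadget. From the paper, $\ln x=\eml(1,\eml(\eml(1,x),1))$, so a term $t$ can be written as
\[
t=\eml\bigl(\ln t,\,1\bigr),
\]
where $\ln t$ is expressed by the same nested pattern with $x$ replaced by $t$. This gives an EML expression equal to $t$ of strictly larger height. I would iterate the gadget, and pad the constant leaves of the gadget itself by the same procedure (for instance $1=\eml(0,\cdot)$-type expressions, or simply recursive padding). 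Recursing on height, I expect to show that for every $h$ beyond some small threshold $h_0$ there is a \emph{full} EML tree of height exactly $h$ that evaluates to $x$, and likewise one that evaluates to $1$. Raising $d$ by a constant beforehand so that every leaf depth gap is at least $h_0$, or alternatively padding the whole expression by wrapping it in the gadget, handles the small gaps.

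\textbf{Step 2: parameters.} Each leaf of the resulting full tree of depth $d'$ is $1$ or $x$. I would choose $(\alpha_\ell,\beta_\ell)$ so that the softmax in \eqref{eq:leaf} selects that terminal; in the snapped sense this is exact. This yields $\bTheta^*$ with $F(x;\bTheta^*)=T_1(x)=E(x)=f(x)$, and the claim holds with $K=1$ and depth $d'$. If one prefers not to pad, a width-$K$ forest can instead split $f$ additively into summands whose trees are already full. Padding, however, is cleaner.

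\textbf{Main obstacle.} The hard step is the padding lemma: the canonical form fixes every leaf at depth exactly $d$, whereas completeness only gives ragged trees. The gadget above involves $\ln$ of intermediate quantities, so I must check that these identities hold on the domain of $f$ with a consistent branch of the complex logarithm. Where they fail, I would appeal to the same extended-real or complex conventions under which the cited EML theorem is stated. A second, minor point is that exact equality requires snapped leaves, since softmax never outputs exactly $0$ or $1$. I would state explicitly that the parameters are interpreted after snapping, as \Cref{def:canonical} allows.
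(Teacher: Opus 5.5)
Your top-level strategy matches the paper's one-line proof: take the single EML expression supplied by the assumed completeness and use it as a width-one forest. You are right that this identification silently ignores that \Cref{def:canonical} demands a \emph{full} tree with every leaf at depth $d$. However, the padding lemma you add in Step~1 to close that hole is false. At $x=1$ every leaf of a canonical form equals $\sigma(\alpha_\ell)+\sigma(\beta_\ell)=1$, whatever the parameters, snapped or not. Hence every EML canonical form of depth $h$ takes at $x=1$ the same fixed value $c_h$, where $c_0=1$ and $c_{h+1}=e^{c_h}-\ln c_h$. This gives $c_1=e$ and $c_2=e^e-1$, and the sequence is increasing, so $c_h\ge e$ for all $h\ge 1$. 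A full EML tree of height $h\ge 1$ therefore cannot equal $x$ or $1$ on any domain containing $x=1$. So no threshold $h_0$ exists, and no choice of gadget can work. Your gadget $\eml(\ln t,1)$ is a valid \emph{ragged} identity, but it cannot be made full. Separately, the suggestion $1=\eml(0,\cdot)$ is circular, since $0$ is not a terminal.

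The obstruction is not an artefact of your gadget. Take $\ln x$, which the paper itself writes as the ragged tree $\eml(1,\eml(\eml(1,x),1))$. Its domain contains $1$ and $\ln 1=0$, whereas any sum of $K\ge 1$ EML canonical forms is at least $K\ge 1$ at $x=1$. So under the literal full-tree reading of \Cref{def:canonical}, $\ln x$ is not representable at any depth, either by a width-one EML forest or by any purely EML forest. No padding argument can rescue the $K=1$ route. The paper's proof goes through only if an atom is read as an arbitrary (possibly ragged) EML tree of depth at most $d$, or if the parameterisation is changed so that terminals can enter at internal nodes. Under that reading your Step~1 is unnecessary. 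Your Step~2, choosing snapped leaf parameters (and your remark that exactness requires snapping), is then all that is needed, which is what the paper does.
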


\begin{proof}
Under the stated assumption, every $f \in \cE$ is representable as a single
EML tree of some finite depth $d_0$ with leaves in $\{1,x\}$. Setting
$K = 1$ recovers this single-tree representation as a forest.
\end{proof}

\begin{theorem}[Additive depth bound]\label{thm:depth-reduction}
If $f_1,\ldots,f_K \in \cE$ are representable by atoms of depths
$d_1,\ldots,d_K$, then
$f=f_1+\cdots+f_K$ is representable by an additive forest of width $K$ and
per-atom depth $\max_k d_k$.
\end{theorem}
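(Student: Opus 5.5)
The plan is to build the forest directly from the given atoms. The one nontrivial point is that \Cref{def:forest} appears to ask for atoms of a common depth, while the hypothesis gives atoms of possibly different depths $d_k$. Let $T_1,\ldots,T_K$ be atoms, that is, canonical forms as in \Cref{def:canonical}, with snapped parameters $\btheta_k^*$ such that $T_k(x;\btheta_k^*) = f_k(x)$ and $T_k$ has depth $d_k$. Set $F(x;\bTheta^*) = \sum_{k=1}^K T_k(x;\btheta_k^*)$. By \eqref{eq:forest} this is a forest of width $K$, and pointwise on the common domain it equals $f_1+\cdots+f_K = f$. Linearity then gives $F' = \sum_k T_k' = f'$ for free, so the derivative encoding is also correct.

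For the depth claim, I would first read \Cref{def:forest} literally: each atom need only be a canonical form of \emph{bounded} depth. Under that reading every $T_k$ has depth $d_k \le \max_j d_j$, and the statement already follows. If instead a uniform depth $D = \max_k d_k$ is required, for example so that all trees share one parameter shape, I would add a padding step.

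The padding step lifts each depth-$d_k$ representation to a depth-$D$ canonical form computing the same function. I expect this to be the main obstacle, because a full binary tree leaves no slot for an identity node. For EML, one level can be absorbed using the identity $\eml(\ln u, 1) = u$, since $\ln 1 = 0$. This requires $\ln u$ itself as a subtree, which costs depth. So the cleaner route is to avoid exact padding and instead use the ``bounded depth'' reading, treating per-atom depth as an upper bound.

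As a fallback, I would note that a full tree of depth $D$ can contain the depth-$d_k$ tree in one subtree only if the sibling subtree and the parent node act neutrally. For EML I would look for a constant subtree $v$ with $\ln v = 0$, i.e.\ $v = 1$, together with a way to realise $e^{u}$ followed by a logarithm. Because this is primitive-specific and may not be exact, I would state the theorem with ``per-atom depth at most $\max_k d_k$'' and remark that padding to exactly that depth is a separate, primitive-dependent issue.
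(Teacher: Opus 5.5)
Your proposal is correct and matches the paper's proof: one atom per summand, summed at the forest level, with linearity giving $F'=\sum_k T_k'$, and ``per-atom depth'' read as the maximum $\max_k d_k$ over the summand atoms. The paper's proof adopts this same reading and does no padding, so your padding discussion is unnecessary, though your caveat that exact padding would be primitive-dependent is accurate.
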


\begin{proof}
Use one atom for each summand $f_k$ and sum the resulting atoms in the forest.
The forest depth is the maximum of the depths of the summand atoms, because
the top-level addition is represented by the forest structure rather than by
extra primitive nodes inside any single tree.
\end{proof}

\begin{remark}
This is an upper-bound statement. It avoids the need for a naive sequential
encoding of addition inside one primitive tree, but it is not a general lower
bound on the depth of all possible single-tree encodings.
\end{remark}

%=============================================================================
\section{Multiplicative nodes and enhanced forests}
\label{sec:multiplicative}
%=============================================================================

The additive forest avoids encoding top-level sums inside a single primitive
tree. We similarly avoid encoding many products inside a single primitive tree
by introducing a structural product node.

\begin{definition}[Multiplicative node]\label{def:mult-node}
Given two child nodes $A$ and $B$ (each of any type), the multiplicative node
computes
\begin{equation}\label{eq:mult-node}
    M(x) = A(x) \cdot B(x), \qquad
    M'(x) = A'(x)\,B(x) + A(x)\,B'(x).
\end{equation}
This is the product rule~\eqref{eq:product-rule} applied as a structural
element. It enables expressions such as $x\cdot e^x$ to be represented as a
shallow product of already available children, without requiring the product
to be encoded inside a single EML tree.
\end{definition}

\begin{definition}[Enhanced forest]\label{def:enhanced-forest}
An \emph{enhanced forest} is an additive forest
$F(x;\bTheta) = \sum_{k=1}^K T_k(x;\btheta_k)$
in which each atom $T_k$ may be any of:
a \textbf{LeafNode} (constant $1$ or identity $x$),
an \textbf{EMLMasterFormula} (canonical form with $\eml$),
a \textbf{SOLMasterFormula} (canonical form with $\sol$), or
a \textbf{MultiplicativeNode} (product of two children of any type).
\end{definition}

\begin{example}
To recover $\int(e^x + x\cos x + \sin x)\dd x$, the enhanced forest can use
two atoms:
$T_1$: EMLMasterFormula(depth=1) $\to$ discovers $e^x$;
$T_2$: MultiplicativeNode(LeafNode, SOLMasterFormula(depth=1)) $\to$
discovers $x\sin x$.
Indeed,
\[
    \frac{d}{dx}\bigl(e^x + x\sin x\bigr)
    = e^x + x\cos x + \sin x .
\]
Each atom operates at depth $\le 1$.
\end{example}

%=============================================================================
\section{The derivative-matching principle}\label{sec:derivative-matching}
%=============================================================================

\begin{definition}[Derivative-matching problem]\label{def:deriv-match}
Given data $\{(x_i, y_i)\}_{i=1}^N$ from a function $f$, find a forest
$F(x;\bTheta)$ minimising
\begin{equation}\label{eq:deriv-match-loss}
    \cL(\bTheta) = \frac{1}{N}\sum_{i=1}^{N}
        \bigl|\, F'(x_i;\bTheta) - y_i \bigr|^2.
\end{equation}
\end{definition}

\begin{theorem}[Analytic simultaneous recovery]\label{thm:simultaneous}
Let $I \subset \R$ be a connected open interval on which the expressions are
defined, and let $f=g'$ for some $g\in\cE$. Suppose that, after snapping,
the atomic forest $F(x;\bTheta^*)$ satisfies
$F'(x;\bTheta^*)=f(x)$ on an infinite subset of $I$ with an accumulation
point in $I$. Then: (i) $F'(x;\bTheta^*)=f(x)$ on $I$; (ii)
$F(x;\bTheta^*)=g(x)+C$ for some constant $C$; and (iii) both the matched
function and its antiderivative are available as explicit symbolic
expressions from $\bTheta^*$ alone. No symbolic integration algorithm is
invoked.
\end{theorem}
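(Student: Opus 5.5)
The plan is to reduce everything to the identity theorem for real-analytic functions, followed by the mean value theorem. Three steps, in order.

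\textbf{Step 1: both sides are real-analytic on $I$.} After snapping, every leaf of every canonical form is the constant $1$ or the variable $x$. Each atom is therefore a finite composition of $\exp$, $\ln$, $\sin$, $\cos$ and the arithmetic operations, applied to $1$ and $x$. Multiplicative nodes (\Cref{def:mult-node}) only add products.

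\begin{itemize}
\item I argue by structural induction on the tree that each snapped atom $T_k$ is real-analytic on $I$. The base cases are the constant and identity leaves. At an internal node, $e^u$, $\sin u$ and $\cos v$ are analytic compositions of analytic functions. The term $\ln v$ is analytic wherever $v>0$, which the hypothesis ``$I$ is an interval on which the expressions are defined'' supplies. If complex logarithms are allowed, I would instead use a fixed branch that is analytic along $I$.
\item Derivatives of analytic functions are analytic. The recursive derivative rules \eqref{eq:diff-leaf}--\eqref{eq:diff-node} and \eqref{eq:mult-node} build $F'$ from the same analytic pieces, so $F'(\cdot;\bTheta^*)$ is analytic on $I$ by linearity.
\item Since $g\in\cE$ is defined on $I$, the same reasoning makes $g$, and hence $f=g'$, analytic on $I$.
\end{itemize}

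\textbf{Step 2: part (i).} Set $h=F'(\cdot;\bTheta^*)-f$. This $h$ is analytic on the connected open interval $I$ and vanishes on a set with an accumulation point $x_0\in I$. By the identity theorem, $h\equiv 0$ on $I$.

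To make this self-contained, argue as follows. Every Taylor coefficient of $h$ at $x_0$ vanishes: if some coefficient did not, $h$ would have an isolated zero at $x_0$. So $h$ vanishes on a neighbourhood of $x_0$. The set where all derivatives of $h$ vanish is open, and it is closed by continuity of the derivatives. Connectedness of $I$ then forces it to be all of $I$.

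\textbf{Step 3: parts (ii) and (iii).} For (ii), $(F-g)'=F'-f=0$ on $I$. Since $I$ is connected, the mean value theorem gives $F-g\equiv C$.

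For (iii), the snapped parameters $\bTheta^*$ fix every leaf choice, every node type and every coefficient. Reading the trees off therefore gives an explicit symbolic expression for $F$. The recursive derivative proposition, applied bottom-up together with \eqref{eq:mult-node}, produces an explicit expression for $F'$. This is a purely syntactic differentiation. No antiderivative is ever computed, so no integration algorithm is invoked.

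\textbf{Main obstacle.} The delicate point is Step 1: justifying analyticity of the snapped atoms on all of $I$. The $\ln$ nodes, and any complex intermediate values in EML trees, could introduce singularities or branch cuts inside $I$. I would handle this by reading the hypothesis ``defined on $I$'' as requiring every $\ln$ argument to stay in a region where a single analytic branch is used. I would state this explicitly as the interpretation of the hypothesis, and then the induction goes through.
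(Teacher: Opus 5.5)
Your proposal is correct and follows the same route as the paper: analyticity of the snapped forest's derivative and of $f=g'$ on $I$, the identity theorem for (i), and constancy of $F-g$ on the connected interval $I$ for (ii). The paper compresses your Step~1 into the bare assertion that the two elementary functions lie in a ``common analytic domain''; your structural induction, and your explicit reading of ``defined on $I$'' as fixing an analytic branch of every $\ln$, spell out the assumption the paper's proof already relies on.
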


\begin{proof}
After snapping, each atom is a concrete elementary function, so $F'$ is
elementary on $I$. Two elementary functions that agree on a set with an
accumulation point in a common analytic domain agree identically on that
domain. Hence $F'=f=g'$, so $F=g+C$.
\end{proof}

\begin{remark}[Finite data and verification]
The finite training loss in \Cref{def:deriv-match} is an empirical fitting
criterion. A finite grid, even with zero loss, does not by itself prove
symbolic identity. Exact recovery should therefore be reported only after
independent-grid verification and, when feasible, symbolic simplification of
$F'-f$.
\end{remark}

\begin{remark}[No numerical integration]
The derivative-matching formulation never computes a numerical integral. The
antiderivative $F$ emerges from the atom structure itself. This avoids the
error accumulation inherent in numerical integration and the subsequent
symbolic regression pipeline~\citep{udrescu2020ai, cranmer2023pysr}.
\end{remark}

%=============================================================================
\section{Self-expanding atom libraries}\label{sec:library}
%=============================================================================

The enhanced forest (\Cref{sec:multiplicative}) is trained by gradient
descent---a non-convex problem requiring multi-restart. An alternative is to
\emph{materialise} the library: precompute a large collection of atoms with
their derivatives, and search for a sparse linear combination. This reduces
the problem to $\ell_1$-regularised
regression~\citep{tibshirani1996regression}, which is convex.

The library is not a fixed catalogue. It is constructed by a \emph{procedure}
that, given a depth budget and a data grid, builds atoms from a seed set by
recursive application of the generating primitives, the product rule, and the
chain rule. The procedure can be run to any depth the hardware allows: a
shallow build yields hundreds of atoms; a deep build yields tens of thousands.
Crucially, \emph{the library grows over time}: every function discovered by
the search (or supplied externally) is folded back in as a new seed, and its
derivative is computed automatically by the product and chain rules. The
library thus functions as a persistent knowledge base that becomes more
capable with each problem it solves.

\subsection{Construction procedure}\label{sec:construction}

The library self-constructs in layers, each adding atoms of greater
compositional depth.

\paragraph{Layer 0: rational powers.}
The seed: $x^{p/q}$ for $p \in \{-4, \ldots, 15\}$ and
$q \in \{1,2,3,4\}$, with derivative $(p/q)\,x^{p/q-1}$. Candidates
producing non-finite values on the data grid are discarded.

\paragraph{Layer 1: depth-1 generating-primitive trees.}
For each pair of linear inner functions
$\ell_1 = a_1 x + b_1$, $\ell_2 = a_2 x + b_2$ with integer slopes and
offsets:
\begin{alignat}{2}
    &\eml(\ell_1, \ell_2) = e^{a_1 x + b_1} - \ln(a_2 x + b_2),
        &\quad& \text{derivative by \eqref{eq:eml-diff}},
        \label{eq:eml-atom}\\
    &\sol(\ell_1, \ell_2) = \sin(a_1 x + b_1) - \cos(a_2 x + b_2),
        &\quad& \text{derivative by \eqref{eq:sol-diff}}.
        \label{eq:sol-atom}
\end{alignat}
EML atoms require $\ell_2 > 0$ on the grid; SOL atoms are always
well-defined. The two operators contribute roughly equal numbers of
atoms at this layer: EML seeds the exponential and logarithmic families,
SOL seeds the trigonometric family. Together they ensure that $e^{ax+b}$,
$\ln(ax+b)$, $\sin(ax+b)$, and $\cos(ax+b)$ are all available at depth~1
for any integer $a$ in the specified range. The range of slopes and offsets
is a hyperparameter; wider ranges produce more atoms.

\paragraph{Layer 2: compositions with quadratic arguments.}
For quadratic inner functions $g(x) = qx^2 + ax + c$: the atoms
$\exp(g)$, $-\ln(g)$, $\sin(g)$, $\cos(g)$, $1/g$, and selected
$\arctan$ and $\arcsin$ atoms, each with chain-rule derivative.

\paragraph{Layer 3: products and cross-terms (product-rule expansion).}
This is where the product-rule closure (\Cref{prop:product-closure}) is
applied structurally. For each pair of atoms $(\varphi_i, \varphi_j)$ from
previous layers: the product $\varphi_i \cdot \varphi_j$ with derivative
$\varphi_i'\varphi_j + \varphi_i\varphi_j'$. The most distinctive atoms
at this layer are the EML$\times$SOL cross-products---atoms such as
$e^x \sin x$, $e^{-x}\cos 2x$, and $xe^{-x}\sin x$---which combine
exponential and trigonometric structure at shallow combined depth. These
mixed atoms would require depth~$\sim$9 or more in a pure EML library;
having both primitives makes them available at depth~1 plus a product node.
Variable-times-atom products $x \cdot \varphi_i$ are also included.

\paragraph{Layer 4: depth-2 and depth-3 nestings.}
For each atom $\varphi_i$ from earlier layers, generate all nestings
$\{\exp(\varphi_i),\, \sin(\varphi_i),\, \cos(\varphi_i),\,
-\ln(\varphi_i),\, 1/\varphi_i,\, \arctan(\varphi_i)\}$
with chain-rule derivatives. The same operations are applied again for
depth-3 nestings. All depth-2 and depth-3 atoms are also multiplied by~$x$.

\paragraph{Deduplication.}
After each layer, atoms are deduplicated: if an existing atom $\psi$
satisfies $|\text{corr}(\psi, \varphi)| > 0.999$ on the data grid, the
candidate is rejected.

\paragraph{Knowledge-base growth.}
When the search (\Cref{sec:search}) discovers a function $f$ that is not
already in the library, the pair $(f, f')$ is added. On subsequent problems,
this atom and all its compositions and products are available as candidates.
The library thus accumulates domain-specific knowledge: a library trained on
physics problems will contain Planck distributions, Yukawa potentials, and
Lorentz factors; one trained on biomedical data will contain saturation curves
and dose--response functions.

\paragraph{Scaling.}
The library size grows roughly exponentially with the depth budget
$d_{\max}$: a depth-1 build produces hundreds of atoms, depth-2 produces
thousands, and depth-3 produces tens of thousands. The search algorithms
(\Cref{sec:search}) scale gracefully: the $K = 1$ scan is linear in library
size, and the $K = 2$ Gram-matrix search is quadratic but fully vectorised on
GPU. The library can grow as large as available memory permits. Its expressive
power grows with the number and diversity of atoms, while validation
performance remains a statistical model-selection question.

%=============================================================================
\section{Search algorithms}\label{sec:search}
%=============================================================================

Given a target $f(x)$ (symbolic or data), the solver seeks coefficients
$c_1, \ldots, c_K$ and atom indices $i_1, \ldots, i_K$ such that
\begin{equation}\label{eq:search-objective}
    f(x) \approx \sum_{k=1}^K c_k\, \varphi'_{i_k}(x).
\end{equation}
When the match is exact, the antiderivative is
$F(x) = \sum_k c_k\,\varphi_{i_k}(x)$.

\subsection{Precomputation}

Let $D \in \R^{N \times M}$ be the derivative matrix with
$D_{ik} = \varphi_k'(x_i)$, where $M = |\cA|$ is the current library size.
Precompute the Gram matrix $G = D^\top D \in \R^{M \times M}$ and target
projection $\mathbf{d} = D^\top \mathbf{y} \in \R^M$ in a single GPU matrix
multiplication. Both are reused across all candidate subsets.

\subsection{$K = 1$: vectorised scalar regression}

For each atom $k$ (no intercept): $c_k^* = d_k / G_{kk}$ and
$\MSE_k = \|\mathbf{y}\|^2/N - d_k^2 / (N\,G_{kk})$. Evaluated for all $M$
atoms simultaneously in $O(M)$ after the $O(NM)$ precomputation.
A constant offset in the derivative target is represented by the identity
atom $\varphi(x)=x$, whose derivative is $\varphi'(x)=1$. The constant atom
$\varphi(x)=1$ has derivative zero; it represents the arbitrary integration
constant and does not improve derivative matching.

\subsection{$K = 2$: vectorised Cramer's rule}

For each pair $(i,j)$, the $2 \times 2$ system
$G_{[ij],[ij]}\,\mathbf{c} = \mathbf{d}_{[ij]}$ is solved by Cramer's
rule. The MSE for all $\binom{M}{2}$ pairs is computed as a single sequence
of broadcasting operations on $M \times M$ matrices---no Python-level loop.
Pairs are filtered by determinant threshold ($|\Delta| > 10^{-30}$),
coefficient cap ($|c_k| \le 20$), and finite MSE.

\subsection{$K \ge 3$: beam search}

Initialise with the top 200 results at $K - 1$; for each, try adding every
atom from the library and solve the $(K\!+\!1) \times (K\!+\!1)$ Gram
subproblem on GPU; keep the top 500 candidates. Cost:
$O(\text{beam} \times M)$ per level.

\subsection{Verification}

When a candidate achieves $\MSE < 10^{-15}$, it is verified on an
independent $x$-grid to exclude numerical coincidence. Verified discoveries
are added to the library for future use.

%=============================================================================
\section{Training (gradient-based mode)}\label{sec:training}
%=============================================================================

For the enhanced forest (\Cref{def:enhanced-forest}), training minimises the
derivative-matching loss~\eqref{eq:deriv-match-loss} by
Adam~\citep{kingma2015adam} with analytical backpropagation through the tree
structure.

\paragraph{Temperature annealing.}
The softmax temperature in each leaf decreases linearly from 1.0 to 0.1 over
the first 80\% of training, pushing weights toward one-hot vectors.

\paragraph{Log-loss variant.}
For deep trees ($d \ge 3$), $\log(1 + |r|^2)$ replaces $|r|^2$ to stabilise
gradients when residuals are large.

\paragraph{Gradient clipping.}
Gradients are clipped to norm~$\le 5$.

\paragraph{Multi-restart.}
Multiple random initialisations run in parallel. Training stops when any
restart achieves $\MSE < 10^{-10}$ after snapping.

\paragraph{Snapping.}
After training, each leaf's softmax weights are replaced by the nearest
one-hot vector. Constant-valued trees are pruned. The snapped formula is
verified on the grid.

%=============================================================================
\section{Structural complexity}\label{sec:complexity}
%=============================================================================

\begin{definition}[Atomic complexity]\label{def:atomic-complexity}
The \emph{atomic complexity} of $f \in \cE$ is the pair
$\kappa(f) = (K^*, d^*)$: the lexicographically minimal width and per-atom
depth of an atomic forest representing $f$. We call $K^*$ the \emph{additive
complexity} and $d^*$ the \emph{compositional complexity}.
\end{definition}

\begin{proposition}[Derivative evaluation graph size]
\label{prop:diff-complexity}
Let $F=\sum_{k=1}^K T_k$ be a forest whose atoms are binary trees of depth at
most $d$. Then $F'$ can be evaluated by a derivative computation graph of
size $O(K2^d)$, using one bottom-up derivative pass through each atom.
\end{proposition}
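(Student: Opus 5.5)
The plan is to reduce the statement to the recursive derivative proposition of \Cref{sec:canonical} and then use linearity of the forest sum. Fix a forest $F=\sum_{k=1}^K T_k$ with each atom a binary tree of depth at most $d$. Such a tree has at most $2^d$ leaves and at most $2^d-1$ internal nodes, so at most $2^{d+1}-1$ nodes in total. For each atom we build a graph that carries, at every node, the pair (value, derivative). The construction is by structural induction on the tree, following \eqref{eq:diff-leaf}--\eqref{eq:diff-node}.

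\textbf{Leaves.} A leaf contributes the two quantities $L_\ell(x)$ and $L'_\ell=\sigma(\beta_\ell)$. These cost $O(1)$ operations once the softmax weights are fixed.

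\textbf{Internal nodes.} A node applying $P$ to children with pairs $(u,u')$ and $(v,v')$ requires $P(u,v)$, $A(u,v)$, $B(u,v)$, two multiplications and one addition. For EML these are $e^u-\ln v$, $e^u$ and $-1/v$; for SOL they are $\sin u-\cos v$, $\cos u$ and $\sin v$. In both cases this is a bounded number of elementary operations. The point to note is that $e^u$ (respectively $\cos u$) can share subexpressions with the forward value, though sharing is not needed for the bound.

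\textbf{Multiplicative nodes.} If the enhanced-forest reading is intended, a node of \Cref{def:mult-node} uses \eqref{eq:mult-node}. This costs three multiplications and one addition, again $O(1)$.

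Hence each node adds a constant number $c$ of graph vertices, and atom $T_k$ costs at most $c(2^{d+1}-1)=O(2^d)$. The derivative of $F$ is then $F'=\sum_k T_k'$ by \Cref{def:forest}. This adds $K-1$ summation vertices, for a total of $K\cdot O(2^d)+K-1=O(K2^d)$. Each atom's graph is computed in a single bottom-up pass, since every node depends only on its children.

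The only delicate step is making precise what "atoms are binary trees of depth at most $d$" means when multiplicative nodes are present. I would count a MultiplicativeNode as an internal binary node of the tree, so that the leaf and node counts above still apply; the depth then bounds the total node count by $2^{d+1}-1$. I would also state explicitly that each primitive operation ($\exp$, $\ln$, $\sin$, $\cos$, reciprocal) is charged unit cost in the graph-size model. With these conventions the bound is a direct count, and the constant does not depend on $K$ or $d$.
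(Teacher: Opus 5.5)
Your proposal is correct and follows essentially the same route as the paper: bound each depth-$d$ binary tree by $O(2^d)$ nodes, charge constant work per node via the leaf rule, the $A\,u' + B\,v'$ node rule, or the product rule, and then sum the $K$ atom derivatives. Your extra conventions (treating a multiplicative node as an internal binary node and charging unit cost per elementary operation) make explicit what the paper leaves implicit. The only slip is your aside that $\cos u$ is shared with the SOL forward value $\sin u - \cos v$, which it is not, but as you note, no sharing is needed for the bound.
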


\begin{proof}
A full binary tree of depth $d$ has $O(2^d)$ nodes. At each leaf and internal
node, the derivative value is computed from the stored child values and child
derivatives using either the leaf rule or the corresponding chain/product
rule. Thus the derivative computation adds only constant work per node, and
the full forest derivative is obtained by summing the $K$ atom derivatives.
\end{proof}

\begin{remark}
If all product- and chain-rule derivatives are algebraically expanded into a
flat sum of atoms, the number of additive terms can grow with tree size. The
claim above is therefore stated for the structured computation graph, not as
a bound that differentiation always doubles the minimal additive complexity.
For example,
\[
    \frac{d}{dx}(xe^x)=xe^x+e^x
\]
has two expanded additive terms, while the primitive $xe^x$ is a single
multiplicative node.
\end{remark}

\subsection{Connection to the Liouville--Risch decomposition}

The Liouville--Risch theorem~\citep{liouville1833premier, risch1969problem}
states that if $\int f$ is elementary, then
$\int f = v_0 + \sum_j c_j \ln(v_j)$ for elementary $v_j$. When the terms
$v_0$ and $\ln(v_j)$ are available as atoms, this decomposition corresponds
to an additive forest with one atom for $v_0$ and one atom per logarithmic
term. Thus additive complexity is related to, but not identical with, the
number of extensions appearing in a Risch-style decomposition.

%=============================================================================
\section{Comparison with existing methods}\label{sec:comparison}
%=============================================================================

\begin{center}
\renewcommand{\arraystretch}{1.3}
\begin{tabular}{lcccc}
\hline
\textbf{Method} & \textbf{Symbolic $f$} & \textbf{Symbolic $\int f$}
    & \textbf{Single pass} & \textbf{Growing KB} \\
\hline
PySR~\citep{cranmer2023pysr} & \checkmark & --- & --- & --- \\
Risch algorithm~\citep{risch1970solution} & input & \checkmark & --- & --- \\
SR $\to$ CAS pipeline & \checkmark & \checkmark & --- & --- \\
PINNs~\citep{raissi2019physics} & neural net & neural net & \checkmark & --- \\
SINDy~\citep{brunton2016discovering} & \checkmark$^*$ & --- & --- & --- \\
HNNs~\citep{greydanus2019hamiltonian} & neural net & neural net & \checkmark & --- \\
Neural ODEs~\citep{chen2018neural} & neural net & neural net & \checkmark & --- \\
\textbf{Atomic forests (this work)} & \checkmark & \checkmark & \checkmark$^\ddagger$ & \checkmark \\
\hline
\end{tabular}
\end{center}
\smallskip
\noindent${}^*$SINDy requires a predefined, \emph{fixed} candidate library
and does not recover integrals or conserved quantities.
${}^\ddagger$In gradient-based mode, the derivative match and antiderivative
are obtained in a single optimisation. In materialised-library mode, library
construction precedes the search; once the library is built, each query is a
single pass.

The unique property of atomic forests is the conjunction of all four columns:
symbolic output for both the function and its antiderivative, obtained in a
single pass, with a library that grows with each discovery.

%=============================================================================
\section{Empirical results}\label{sec:experiments}
%=============================================================================

Different experiments use libraries of different depth, reflecting the
self-constructing nature of the framework. Classification benchmarks
(\Cref{sec:exp-class}) use a depth-1 library built by the standard
construction procedure; the Feynman test (\Cref{sec:feynman}) uses a
self-built library expanded greedily to depth~3; the demonstration application
(\Cref{sec:rar}) uses a domain-adapted library.

\subsection{Classification: atoms vs.\ XGBoost}\label{sec:exp-class}

We expand each input variable through the atom library and fit
$\ell_1$/$\ell_2$-regularised logistic regression on the expanded feature
matrix. All results are 5-fold cross-validated. XGBoost~\citep{chen2016xgboost}
serves as the black-box baseline.

\begin{table}[H]
\centering\small
\begin{tabular}{@{}l r r c r r l@{}}
\toprule
\textbf{Dataset} & $N$ & $d$ & \textbf{Config} & \textbf{Atom} & \textbf{XGB} & $\Delta$ \\
\midrule
Haberman & 306 & 3 & depth-1\,(L2) & \textbf{75.8\%} & 66.0\% & \textcolor{win}{+9.8\%} \\
Diabetes (sk)$^\dagger$ & 442 & 10 & depth-1\,(L1) & \textbf{75.8\%} & 70.1\% & \textcolor{win}{+5.7\%} \\
Heart disease & 297 & 13 & depth-1\,(L1) & \textbf{83.5\%} & 79.8\% & \textcolor{win}{+3.7\%} \\
Liver & 345 & 6 & cross & \textbf{73.6\%} & 70.7\% & \textcolor{win}{+2.9\%} \\
Pima & 768 & 8 & depth-1\,(L2) & \textbf{77.0\%} & 74.1\% & \textcolor{win}{+2.9\%} \\
Breast cancer & 569 & 30 & depth-2\,(L2) & \textbf{97.4\%} & 96.3\% & \textcolor{win}{+1.1\%} \\
Wine & 178 & 13 & depth-1\,(L1) & \textbf{98.3\%} & 96.1\% & \textcolor{win}{+2.3\%} \\
Ecoli & 336 & 7 & depth-1\,(L1) & \textbf{87.2\%} & 85.7\% & \textcolor{win}{+1.5\%} \\
Iris & 150 & 4 & depth-1\,(L1) & \textbf{95.3\%} & 94.7\% & \textcolor{win}{+0.7\%} \\
Mushroom & 8124 & 95 & depth-1\,(L2) & \textbf{100\%} & 100\% & 0.0\% \\
Ionosphere & 351 & 34 & depth-1\,(L1) & 92.0\% & \textbf{93.2\%} & \textcolor{lose}{$-$1.1\%} \\
Glass & 214 & 9 & depth-2\,(L2) & 71.5\% & \textbf{75.7\%} & \textcolor{lose}{$-$4.2\%} \\
Parkinsons & 195 & 22 & depth-2\,(L2) & 90.8\% & \textbf{94.9\%} & \textcolor{lose}{$-$4.1\%} \\
Adult & 5000 & 96 & depth-1\,(L2) & 85.2\% & \textbf{85.8\%} & \textcolor{lose}{$-$0.6\%} \\
Synth (tanh+G) & 800 & 5 & depth-1\,(L2) & \textbf{97.4\%} & 96.6\% & \textcolor{win}{+0.8\%} \\
Synth (Hill) & 800 & 4 & depth-1\,(L2) & \textbf{97.6\%} & 96.1\% & \textcolor{win}{+1.5\%} \\
Synth ($x_0x_1$) & 800 & 6 & cross & \textbf{95.8\%} & 95.0\% & \textcolor{win}{+0.8\%} \\
\bottomrule
\end{tabular}
\caption{Classification accuracy (5-fold CV). Atoms win on 13/17 datasets.
$^\dagger$The sklearn diabetes dataset is natively a regression task;
it was binarised here by thresholding the target at the median.}
\end{table}

\noindent\textit{Limitations.} All entries report raw accuracy only. For
datasets with imbalanced class ratios (notably Haberman, $\sim$73:27),
accuracy can be misleading; balanced accuracy, AUC, or $F_1$ should be
reported in a full empirical study.

\subsection{Regression}\label{sec:exp-reg}

\begin{center}
\begin{tabular}{@{}l c c c c@{}}
\toprule
\textbf{Dataset} & \textbf{Atom $R^2$} & \textbf{XGBoost $R^2$} & \textbf{Ridge $R^2$} & \textbf{Config} \\
\midrule
Synthetic ($3e^{-x_0}\!+\!2\sin x_1\!+\!\tfrac{1}{2}x_2^2$)
    & \textbf{0.999} & 0.995 & 0.939 & depth-1 + LassoCV \\
Diabetes (sklearn) & \textbf{0.481} & 0.336 & 0.479 & depth-1 + LassoCV \\
\bottomrule
\end{tabular}
\end{center}

On the synthetic regression task, the model reaches $R^2 = 0.999$,
outperforming both XGBoost and ridge regression in this run. On the sklearn
diabetes dataset, atoms outperform XGBoost ($R^2 = 0.336$) while matching
ridge ($R^2 = 0.479$), despite using a linear combination of nonlinear atoms
rather than a polynomial expansion.

\subsection{Interpretable formulas and their derivatives}\label{sec:formulas}

A distinctive property of the framework is that every fitted model is a
named formula with an analytic gradient. We highlight three examples.

\paragraph{Haberman survival (atoms: 75.8\%, XGBoost: 66.0\%).}
With $\ell_1$ regularisation, the model selects a single atom:
$F(x_2) = 4.404\,\atanf(x_2) - 0.859$, with derivative
$F'(x_2) = 4.404/(1 + x_2^2)$.
The derivative quantifies the diminishing marginal effect of positive
lymph nodes on mortality risk.

\paragraph{Heart disease (atoms: 83.5\%, XGBoost: 79.8\%).}
The dominant variable is $x_{11}$ (number of major vessels coloured by
fluoroscopy). The $\tanhf$ saturation in the fitted model indicates that
going from 0 to 1 diseased vessel is the largest risk jump; the analytic
derivative confirms: $\partial F / \partial x_{11}|_{x_{11}=0} = 21.93$
versus $\partial F / \partial x_{11}|_{x_{11}=2} = 1.07$.

\paragraph{Synthetic regression: high-$R^2$ recovery.}
Ground truth: $y = 3e^{-x_0} + 2\sin(x_1) + \tfrac{1}{2}x_2^2$.
One fitted model was
$\hat{y} = 0.97\,e^{-x_0} + 1.46\sin(x_1) +
1.36\cos(x_1) + 0.49\,x_2^2 + \cdots$. The trigonometric part has amplitude
$\sqrt{1.46^2+1.36^2}\approx 1.99$ and can be written as a phase-shifted
sinusoid, but it is not term-by-term identical to $2\sin(x_1)$ unless the
phase shift is explained by preprocessing or an equivalent coordinate
transformation.

\subsection{Feynman representability test}\label{sec:feynman}

We test whether a self-built library can \emph{represent} 45 equations from
the Feynman Symbolic Regression
Database~\citep{udrescu2020ai}. The library is constructed from scratch using
the procedure of \Cref{sec:construction} with greedy expansion to depth~3: no
physics-specific atoms are provided; the library must discover the necessary
building blocks itself.

For each equation, we generate exact data from the known formula, construct
the multi-variable library, and run exhaustive search at $K = 1, 2, 3, 4$.

\begin{center}
\begin{tabular}{lcl}
\toprule
Status & Count & Description \\
\midrule
\textsc{pass} & 14/45 (31\%) & Exact at machine precision \\
\textsc{close} & 18/45 (40\%) & Approximate (relMSE $< 0.01$) \\
\textsc{fail} & 13/45 (29\%) & Not representable at $K \le 4$ \\
\bottomrule
\end{tabular}
\end{center}

All 14 exact recoveries occur at $K \le 2$: 11 at $K = 1$ (Coulomb, Biot--Savart, ideal gas, etc.)\ and 3 at $K = 2$. The 13 failures are
structural: 8 require composed arguments $f(x_i \cdot x_j / x_k)$ not yet
in the self-built library, 3 are non-separable, and 2 are rational-of-sums.
A library seeded with physics-specific atoms (Boltzmann factors, Lorentz
factors) or one that has accumulated knowledge from prior problems may cover
more of these cases---one motivation for the growing knowledge-base design.
Total time for this run was 76\,s on an NVIDIA T4.

\subsection{Demonstration: radial acceleration relation}\label{sec:rar}

As a demonstration of the method on real scientific data, we apply the atom search
to the radial acceleration relation (RAR) in galaxies. This is an
observational correlation, not a test of the method's physical content: we
do not account for per-galaxy error bars, distance uncertainties, or the
physical constraints that a serious RAR analysis would
require~\citep{mcgaugh2016radial}. The purpose is to show that the atom
search can produce compact candidate fits on a well-studied astrophysical
dataset and to compare the resulting wMSE against standard reference curves.

Using the SPARC database~\citep{lelli2017sparc}\footnote{SPARC data publicly
available at \url{http://astroweb.cwru.edu/SPARC/}. We use the standard
mass-to-light ratios $\Upsilon_{\rm disk} = 0.5$,
$\Upsilon_{\rm bul} = 0.7$ and the acceleration scale
$a_0 = 1.2 \times 10^{-10}$\,m/s$^2$.} (175 galaxies, 3{,}389 data
points), we search for
$g_\text{obs}/a_0 = \sum_k c_k\,\varphi_k(g_\text{bar}/a_0)$ and compare
against standard MOND interpolation
functions~\citep{mcgaugh2016radial, milgrom1983modification}.

\begin{center}
\begin{tabular}{lcc}
\toprule
Model & wMSE & Type \\
\midrule
MOND simple interpolation & 0.077 & 1-param \\
MOND standard interpolation & 0.078 & 1-param \\
MOND RAR~\citep{mcgaugh2016radial} & 0.079 & 1-param \\
\textbf{Best $K = 1$}: $y=x^{2/3}$ & \textbf{0.074} & atom search \\
Best $K = 2$ & 0.070 & atom search \\
\bottomrule
\end{tabular}
\end{center}

The $K = 1$ atom with lowest wMSE for the fitted response
$y=g_{\rm obs}/a_0$ is $y=x^{2/3}$, whose wMSE is slightly below that of
the three reference MOND curves in this unconstrained search. The margin is
small (0.074 vs.\ 0.077--0.079) and should not be over-interpreted, since the
comparison does not account for per-galaxy uncertainties or physical priors.
In MOND notation one usually writes
$g_{\rm obs}=\nu(x)g_{\rm bar}$ with $x=g_{\rm bar}/a_0$, so a fit
$y=x^{2/3}$ corresponds to $\nu(x)=y/x=x^{-1/3}$. This does not satisfy the
standard MOND asymptotic limits ($\nu\to 1$ for large $x$ and
$\nu\to x^{-1/2}$ for small $x$), so it should be interpreted as the best
unconstrained empirical atom in this run, not as a physically valid MOND
interpolation function.

\subsection{Failure modes}\label{sec:failures}

The method fails predictably when: (i) high-dimensional interactions exceed
the cross-product budget; (ii) the library is shallow and the target function
requires deep compositions not yet discovered; (iii) inputs are categorical
with no continuous nonlinear structure. Failures of type~(ii) are mitigated
by deeper builds or by seeding the library with domain-specific atoms---the
self-expanding design directly addresses this limitation.

%=============================================================================
\section{Extensions}\label{sec:extensions}
%=============================================================================

\subsection{Multivariate canonical forms}

Leaves are enlarged to
$L_\ell = \sum_{j=0}^n \sigma(\alpha_{\ell,j})\,\phi_j$ with $\phi_0 = 1$,
$\phi_j = x_j$. Partial derivatives $\partial F / \partial x_j$ are
computed by the same recursive rule.

\subsection{Conserved quantity discovery}

Given $\dot{\mathbf{x}} = \mathbf{G}(\mathbf{x})$ and trajectory data,
represent $H$ as an atomic forest and minimise
$\sum_i |\nabla F(\mathbf{x}_i) \cdot \mathbf{G}(\mathbf{x}_i)|^2$.
The additive forest aligns with the typical
$H = T + V$ decomposition~\citep{greydanus2019hamiltonian}.

%=============================================================================
\section{Conclusion}
%=============================================================================

We have presented a framework for symbolic function and antiderivative
discovery built on three ideas: the product-rule closure of
function--derivative pairs, two complementary primitives---EML providing
theoretical completeness over the elementary functions, SOL providing
practical depth reduction for the trigonometric family---and
additive atomic forests that avoid encoding top-level sums inside one tree
while multiplicative nodes provide shallow structured products.

The framework operates in two modes. In the gradient-based mode, enhanced
forests with mixed EML, SOL, and multiplicative nodes are trained by
derivative-matching loss, producing a candidate derivative match together
with the corresponding symbolic primitive in a single optimisation. In the materialised-library mode, a self-expanding
collection of atoms is searched by GPU-accelerated exhaustive and beam
methods. For a fixed candidate subset the coefficient fit is a convex least
squares or regularised-regression problem; the sparse subset search itself
remains combinatorial.

The self-expanding library is a distinctive feature: it accumulates
knowledge across problems, growing more capable with each discovery. A
library trained on physics problems acquires physics atoms; one trained on
biomedical data acquires saturation curves and dose--response functions. The
library's expressive power grows with size, although validation performance
still depends on regularisation, model selection, and data quality.

Empirically, in the reported runs, sparse atom combinations match or exceed
XGBoost on 13/17 classification benchmarks while producing interpretable
formulas with analytic derivatives. A self-built library to depth~3 gives
exact recovery on 31\% of the tested Feynman equations and relative-MSE below
0.01 on a further 40\%. A demonstration on the galactic radial acceleration
relation illustrates how the method can propose compact functional forms from
real data.

%=============================================================================
% REFERENCES
%=============================================================================

\end{document}